\DeclareMathAlphabet{\mathcal}{OMS}{cmsy}{m}{n}
\newcommand{\statespace}{X}
\newcommand{\actionspace}{U}
\newcommand{\vx}{\mathbf{x}}
\newcommand{\vu}{\mathbf{u}}
\newcommand{\vq}{\mathbf{q}}
\newcommand{\vp}{\mathbf{p}}
\newcommand{\vz}{\mathbf{z}}
\newcommand{\vd}{\mathbf{d}}
\DeclareAcronym{mcts}{
  short=MCTS,
  long=Monte Carlo Tree Search,
}
\DeclareAcronym{cpu}{
  short=CPU,
  long=Central Processing Unit,
}
\DeclareAcronym{scp}{
  short=SCP,
  long=Sequential Convex Programming,
}
\DeclareAcronym{uct}{
  long=Upper Confidence Bound for Trees,
  short=UCT,
}
\DeclareAcronym{rrt}{
  short=RRT,
  long=Rapidly-exploring Random Trees,
}
\DeclareAcronym{prm}{
  long=Probabilistic Roadmaps,
  short=PRM,
}
\DeclareAcronym{cem}{
  short=CEM,
  long=Cross-entropy Motion Planning,
}
\DeclareAcronym{mpt}{
  long=Model Predictive Trees,
  short=MPT,
}
\DeclareAcronym{mppi}{
  long=Model Predictive Path Integral Control,
  short=MPPI,
}
\DeclareAcronym{mpc}{
  long=Model Predictive Control,
  short=MPC,
}
\newcommand{\nosemic}{\renewcommand{\@endalgocfline}{\relax}}
\newcommand{\dosemic}{\renewcommand{\@endalgocfline}{\algocf@endline}}
\let\oldnl\nl
\newcommand{\nonl}{\renewcommand{\nl}{\let\nl\oldnl}}
\newcommand{\pluseq}{\mathrel{+}=}
\DeclareMathOperator*{\argmax}{arg\,max}
\newtheorem{theorem}{Theorem}
\newtheorem{assumption}{Assumption}
\newtheorem{remark}{Remark}
\newtheorem{lemma}{Lemma}
\title{\LARGE \bf
Model Predictive Trees: Sample-Efficient Receding Horizon \\ Planning with Reusable Tree Search
}
\author{John Lathrop, Benjamin Rivière, Jedidiah Alindogan, Soon-Jo Chung%
\thanks{\hspace{-1.0em}This work was supported by NSF Grant 2139433 and in part by Supernal.}%
\thanks{All authors are with the California Institute of Technology.
        1200 E.~California Blvd., Pasadena CA 91106, USA
        {\tt\small \{jlathrop,briviere,jedi,sjchung\}@caltech.edu}}}%
\begin{document}

\maketitle
\thispagestyle{empty}
\pagestyle{empty}


\begin{abstract}


We present Model Predictive Trees (MPT), a receding horizon tree search algorithm that improves its performance by reusing information efficiently.
Whereas existing solvers reuse only the highest-quality trajectory from the previous iteration as a ``hotstart'', our method reuses the entire optimal subtree, enabling the search to be simultaneously guided away from the low-quality areas and towards the high-quality areas. 
We characterize the restrictions on tree reuse by analyzing the induced tracking error under time-varying dynamics, revealing a tradeoff between the search depth and the timescale of the changing dynamics. 
In numerical studies, our algorithm outperforms state-of-the-art sampling-based cross-entropy methods with hotstarting.
We demonstrate our planner on an autonomous vehicle testbed performing a nonprehensile manipulation task: pushing a target object through an obstacle field.
Code associated with this work will be made available at \textbf{https://github.com/jplathrop/mpt}.

\end{abstract}


\section{Introduction}

Gradient-free optimization techniques are an attractive framework for decision making and motion planning on robotic systems where high-fidelity models may not be differentiable and descent algorithms can get caught in local minima.
As a motivating example, we consider nonprehensile manipulation, a setting where a robot uses pushing, pulling, or other means of manipulation without grasping the target object with appendages. 
This task is challenging for conventional gradient-based optimization because of its hybrid dynamics and sparse reward structure.

\ac{uct}, a variant of \ac{mcts}, is a powerful gradient-free technique that strategically explores the space of possible future trajectories, with guaranteed convergence to the optimal trajectory as its runtime increases~\cite{kocsis2006improved}.
Although \ac{uct} is widely applicable to a large class of decision-making problems, its performance is dependent on its computational resources and accumulating enough samples to make an accurate value estimate.
With this context, there is clear benefit in reusing past simulations to improve the quality of search and reduce sample complexity. 
In this work, we propose an efficient sub-tree recycling procedure and characterize the conditions under changing dynamics where tree recycling is not possible and instead must be recomputed from scratch. 

\subsubsection*{Contributions}
We present \ac{mpt}, a receding horizon tree-based planner that reuses past subtree information, reducing the cost of searching while greatly boosting the quality of solutions.
Saving even low-quality trajectories benefits the search by shifting computational effort from re-generating poor solutions to refining high-quality solutions.
We demonstrate our proposed method on a nonprehensile manipulation task performed by an autonomous car in simulation and performing in real-time on onboard hardware. 
We furthermore provide theoretical guarantees on stability and robustness in the face of changing dynamics.
Our method automatically discovers high-level behavior, strategically making and breaking contact with a target object while maneuvering around obstacles to push the target to the goal.

\begin{figure}[t]
    \centering
    \includegraphics[width=1.0\linewidth]{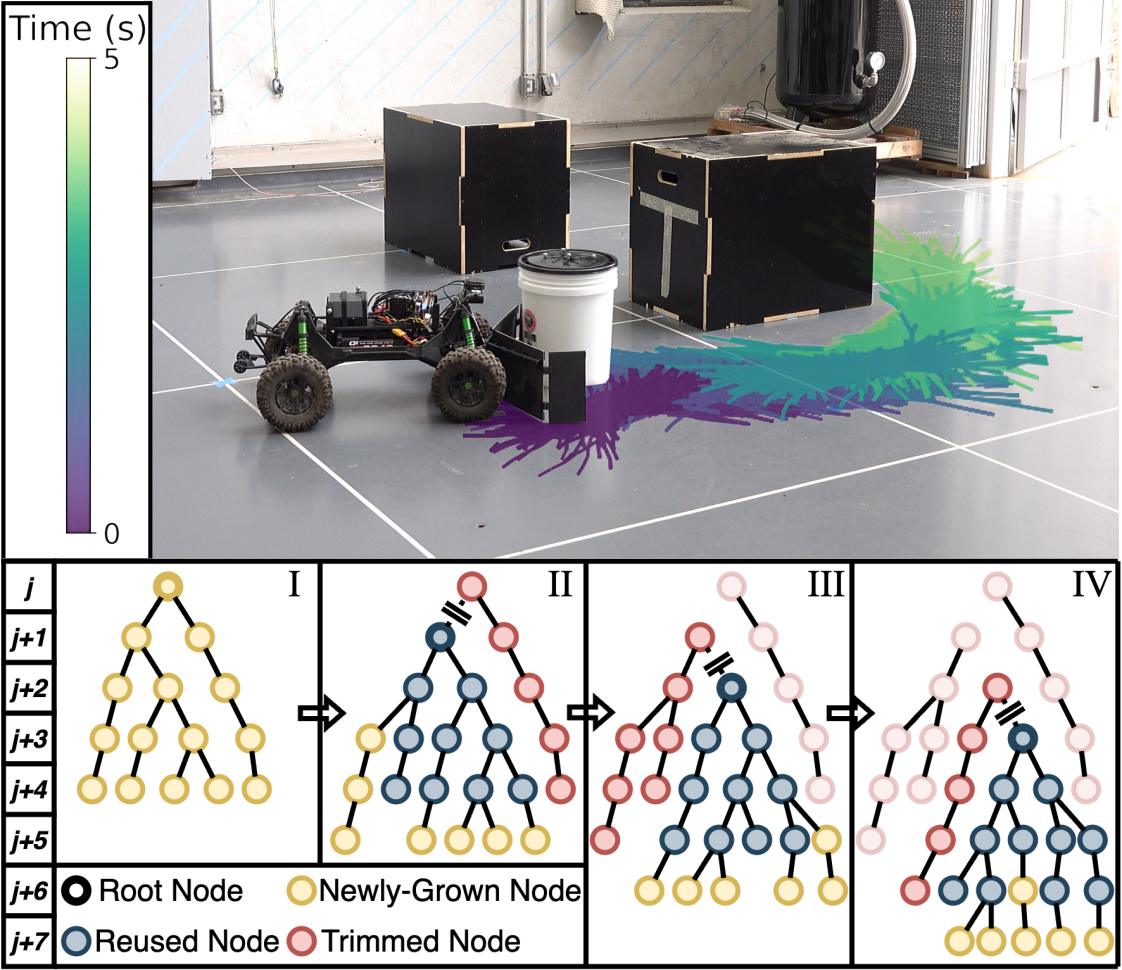}
    \caption{
    \textbf{Top:} Five seconds of real-time generated trees in our hardware experiment, in which the autonomous vehicle testbed pushes a target to a goal region behind an obstacle. On average, 2100 simulated trajectories are grown every 0.2 s.
    The trees are colored by the time they were grown.
    \textbf{Bottom:} The proposed tree growth algorithm visualized over four time steps. At each iteration (I - IV), new nodes and branches are added to the tree, and the best first-level child is selected as the next root.
    In subsequent iterations, older parts of the tree are discarded and new nodes are added.}
    \label{fig:tree_reuse}
\end{figure}

\subsection{Related Work}

Our related work spans sampling-based planning, strategies for reusing data from previous iterations, control-aware planning, and nonprehensile manipulation.  

\subsubsection{Sampling-Based Planners}

Sampling-based tree search is common in robotics, with foundational techniques including \ac{rrt})~\cite{lavalle1998rapidly}, RRT* and variations~\cite{karaman2011sampling, otte2016rrtx}, and \ac{prm}~\cite{kavraki1996probabilistic} as well-tested and theoretically-grounded algorithms.
These methods use a local planner to connect states and search for a goal region.
The stable sparse RRT variation~\cite{li2015sparse} relaxes the need for a local planner.

In contrast, our method uses tree search algorithms investigated in theoretical computer science, in particular the \ac{uct} algorithm~\cite{kocsis2006improved}.
\ac{uct} and variants have been applied in a variety of problem settings, including robotic task planning~\cite{labbe2020monte}, motion planning~\cite{riviere2021neural}, and active sensing~\cite{ragan2023bayesian}.
The advantage of using \ac{uct} is that our method can plan and execute through hybrid contact dynamics, where the sparse reward and non-smooth contacts make local planning and the restriction to a goal region-objective difficult, therefore limiting the applicability of RRT-based algorithms.

\subsubsection{Data Reuse in Receding Horizon Planning}

Real-time planning is commonly implemented with a receding horizon approach, also known as model-predictive control, where the solver iteratively computes and follows a finite-horizon trajectory until the process is terminated. 
In this setting, there are various strategies to reuse information from the previous solver iterations in the current iteration.
\ac{cem}~\cite{kobilarov2012cross} and \ac{mppi}~\cite{williams2016aggressive} are sampling-based methods that initialize the sampling distribution of the next iteration with the mean of the optimal solution from the previous iteration. 
This approach, sometimes called "hotstarting", averages many simulated trajectories into a summary statistic and throws away a large amount of information.

Deterministic nonlinear model predictive control uses a similar technique to provide an initial guess for the optimization solver~\cite{diehl2002real} with the optimal solution to the previous iteration. 
Other works have examined tree reuse and tree correction in a control task~\cite{schneider2016receding} and with changing obstacles~\cite{otte2016rrtx}. 

In contrast to other tree reuse strategies, our method saves the entire selected subtree from the previous iteration, leveraging much more information to refine its search.
Our experimental results demonstrate that hotstarting with a subtree provides a larger improvement than hotstarting with the optimal solution, validating our intuition about information reuse.
In addition, our reuse approach does not require iteration over the entire tree, a time-consuming requirement that limits real-time deployment.

\subsubsection{Nonprehensile Manipulation}
Although our approach can be applied to a wide class of problems, we focus on nonprehensile manipulation as a motivating example because of its inherent complexity for conventional techniques due to the need to plan through contact. 
In previous work~\cite{pang2023global}, the authors demonstrate nonprehensile manipulation with a smoothed contact model.
In contrast, we do not make a smooth approximation of the contact model, but plan directly in the underlying sparse reward and hybrid dynamical landscape. 
Whereas other methods use constraints to maintain contact~\cite{bertoncelli2020linear, zhang2020motion, selvaggio2023non}, our method does not require contact to be maintained: \ac{mpt} demonstrates high-level behavior, such as backing-up and re-positioning, where the vehicle maneuvers around the object to execute a better push.

Finally, deep reinforcement learning has been applied to nonprehensile manipulation~\cite{yuan2018rearrangement} in simulation. 
However, this family of methods requires a large amount of offline training data and lacks theoretical guarantees of optimality and stability.

\section{Model Predictive Trees}\label{sec:algorithm}

\subsection{Problem Setting}

We consider the problem of making decisions over an infinite horizon.
For a compact state space $\statespace \subset \mathbb{R}^n$, compact action space $\actionspace \subset \mathbb{R}^m$, consider a discrete-time control system characterized by known nominal dynamics $F_\text{nom}: \mathbb{R}^n \times \mathbb{R}^m \rightarrow \mathbb{R}^n$
and an unknown time-varying disturbance
$\vd: \mathbb{R}^n \times \mathbb{R}^m \times \mathbb{Z}_{\geq 0} \rightarrow \mathbb{R}^n$,
with a bounded reward function $R:\mathbb{R}^n \times \mathbb{R}^m \rightarrow [0,1]$.
Given an initial condition $\vx_0$, the decision-making problem is to maximize the sum of the reward function over an infinite horizon:
\begin{align} \label{eq:decision_making_problem}
    \vx^*_{\infty}, \vu^*_{\infty} &= \argmax_{\substack{\vx_{\infty} \in \mathbb{R}^\infty \\ \vu_{\infty} \in \mathbb{R}^\infty}}
    \sum_{k=1}^\infty \gamma^{k-1} R(\vx_k, \vu_k) \\
    \text{s.t. }&\quad \vx_{k+1} = F_\text{nom}(\vx_k, \vu_{k+1}) + \vd(\vx_k, \vu_{k+1}, k) \nonumber \\
    &\quad \vx_{k+1} \in \statespace, \quad \vu_{k+1} \in \actionspace, \quad \forall k \in \mathbb{Z}_{\geq 0} \nonumber
\end{align}
where $0 \leq \gamma < 1$ is a discount factor and $k$ is the physical time.
The problem~\eqref{eq:decision_making_problem} is a discounted infinite-horizon Markov Decision Process, given by the tuple $\langle \statespace, \actionspace, F, R, D, \gamma \rangle$ with $F:\mathbb{R}^n \times \mathbb{R}^m \rightarrow \mathbb{R}^n$ being the nominal dynamics plus disturbance.
Manipulation problems are challenging because $F$ is a nondifferentiable function that carries information about the contact between bodies. 
For example, these dynamics can be modeled with a linear complementarity problem~\cite{mirtich1996impulse}.
Here $\vx_{\infty} = [\vx_1, \vx_2, \dots]$ denotes an infinite sequence of states, and likewise for $\vu_{\infty}$.

\subsection{Model Predictive Trees (MPT) Method} \label{sec:mpt}
Our proposed algorithm, specified in Algorithm~\ref{algo:mpt}, has two components: a receding horizon \ac{uct}-based planner and a contraction-theoretic controller.
The planner provides a real-time desired trajectory, and the controller provides exponential stability to the desired trajectory. 
Moreover, contraction-theoretic control guarantees robust stability and provides a framework to analyze the limitations of tree reuse.


As part of \ac{mpt}, we incorporate a time-varying estimate of the disturbance $\vd$ into the planner.
By combining with the user's choice of an online estimator (with some possibilities being adaptive control~\cite{slotine1991applied}, basis library regression~\cite{williams2016aggressive, lew2022safe}, or Bayesian filtering~\cite{ragan2023bayesian}), \ac{mpt} uses estimates of the disturbance $\hat{\vd}_k: \mathbb{R}^n \times \mathbb{R}^m \rightarrow \mathbb{R}^n$ to plan over the model:
\begin{equation}
    \vx_{k+1} = \hat{F}_k(\vx_k, \vu_{k+1}) = F_\text{nom}(\vx_k, \vu_{k+1}) + \hat{\vd}_k(\vx_k, \vu_{k+1}).\nonumber
\end{equation}
The design and stability of the controller have important implications for the accuracy of subtree reuse under changing dynamics, a connection analyzed in Sec.~\ref{sec:analysis}.

The planning portion of the \ac{mpt} algorithm (Algorithm \ref{algo:mpt}) performs sequential tree searches in receding horizon fashion, 
building an incrementally lengthening desired trajectory.
We make a distinction between two time indices: physical time is the passing of time in the real-world and is denoted with $k$, and simulation time is the time index used in the internal simulation of the tree search and is denoted with $j$.

At each physical time step $k = 1, 2, \dots$, \ac{mpt} performs a search with the \ac{uct} algorithm.  
The search runs a large number of fixed-depth trajectories $\ell = 1, \dots, L$, approximating the infinite-horizon problem as a $K$-depth problem with a value estimate $\hat{V}$.
The objective of Equation \eqref{eq:decision_making_problem} is modified to $\sum_{k=1}^K \gamma^{k-1} R(\vx_k, \vu_k) + \hat{V}(\vx_K)$, for
$\hat{V}: \mathbb{R}^n \!\rightarrow \mathbb{R}_{\geq 0}$.

The collection of rollouts form a decision tree $\mathtt{T}_{k+1}$ that holds information about the cumulative reward (in the manner of Eq.~\eqref{eq:decision_making_problem}) and number of visits to each node of the tree.
The index $k+1$ of $\mathtt{T}_{k+1}$ indicates that the root of the decision tree grown at time $k$ has corresponding time $k+1$.

The exploration-exploitation tradeoff of the \ac{uct} algorithm guarantees convergence to the optimal solution of a decision-making problem.
Exploration encourages the tree search to investigate new areas of the space of trajectories and exploitation encourages a refinement of the search in parts of the space that have yielded high rewards, with sampled trajectories quickly concentrating to high-valued regions of space.
The upper confidence bound formula is as follows, where at each parent node $\text{p}$, the value decides through which child node $\text{c}$ to further refine the search:
\begin{equation}\label{eq:uct}
    \textup{UCT(c)} =
    \frac{\text{c}.V}{\text{c}.N} + \epsilon \sqrt{\frac{\log{(\text{p}.N)}}{\text{c}.N}},
\end{equation}
where ``$.V$'' and ``$.N$'' refer to the cumulative value and number of visits to a node, respectively.
Upon completing $L$ \ac{uct}-guided rollouts, the search returns the best action and resulting state out of the search tree $\mathtt{T}_{k+1}$.

The key novelty of our algorithm is the reuse of the selected subtree from the previous iteration to hotstart the tree search at the current time step.
When an action $\vu_{k+1}$ and corresponding child state $\vx_{k+1}$ are selected as ``best'' out of tree $\mathtt{T}_k$, we trim the search tree $\mathtt{T}_k$ at the connection between $\vx_k$ and $\vx_{k+1}$, keeping the trajectories that start at $\vx_{k+1}$. 
The subtree reuse procedure is shown in Figure~\ref{fig:tree_reuse}.
Analyzed in Sec.~\ref{sec:results}, tree reuse enables a more effective search, through which computational power is spent refining high-quality regions of the space of trajectories, rather than re-searching from scratch.

Running in receding horizon fashion, we budget one time step of computational time to the planner, beginning the next iteration's solve with the predicted state $\vx_{k+1}$ while the controller is following the trajectory from $\vx_k$ to $\vx_{k+1}$.

To ensure the system arrives near state $\vx_{k+1}$ when the tree rooted there is ready, we compose our planner with a contraction-theoretic controller, denoted $C$ in the pseudocode.
This controller provides exponential stability to the desired trajectory produced by the planner.
We analyze the stability of our proposed control method and the size of steady-state error as a function of the disturbance $\vd$ and the disturbance estimates $\hat{\vd}$ in Sec.~\ref{sec:analysis}.

We additionally use a tree reset condition to close the loop on the planning process (Line \ref{lin:re-rooting_trim}).
This condition limits the drift between the simulated tree state and the physical state, resetting the tree at a threshold to ensure the simulated tree state and physical state do not diverge.

\SetAlFnt{\small}
\begin{algorithm}[htpb]
\caption{Model Predictive Trees}
\label{algo:mpt}

\SetKwProg{Def}{def}{:}{}
\nosemic
\nonl\text{Parameters:} $\tau: $ Reset threshold \;
\dosemic
\Def{$\textup{Model\_Predictive\_Trees}(\vx_0; \langle \statespace, \actionspace, F_\textup{nom}, R, D, \gamma \rangle, C)$}{
    \tcp{create an initial tree}
    $\mathtt{T}'_1.\text{root} = (\vx_0, \mathbf{0}, [~], 0, 0)$ \;
    $\hat{F}_1 = F_\text{nom}$ \;
    \For{$k=1, \dots, \infty$}{
        \tcp{build tree}
        $\mathtt{T}_k = \textup{UCT\_search}(\mathtt{T}'_k; \langle \statespace, \actionspace, \hat{F}_k, R, D, \gamma \rangle)$ \;
        \tcp{extract desired trajectory}
        \nosemic $\mathtt{T}_k.\text{root}.\text{best\_child} = \argmax\{\text{c}.V / \text{c}.N\}$ \;
        \dosemic \Indp $\text{for c in } \mathtt{T}_k.\text{root}.\text{children}$ \;
        \Indm $\vx^d_k = \mathtt{T}_k.\text{root}.\vx$ \;
        $\vu^d_k = \mathtt{T}_k.\text{root}.\text{best\_child}.\vu$ \;
        \tcp{follow desired trajectory}
        $\vu_{k+1} = C(\vx_k, \vx^d_k, \vu^d_{k+1}; \hat{F}_k)$ \;
        $\vx_{k+1} = \text{rollout}(\vx_k, \vu_{k+1})$ \label{lin:real_rollout}\;
            \tcp{estimate disturbance}
            $\hat{\vd}_{k+1} = \text{dynamics\_estimate}(\dots)$ \label{lin:dynamics_estimate} \;
            $\hat{F}_{k+1}(\cdot, \cdot) = F_\text{nom}(\cdot, \cdot) + \hat{\vd}_{k+1}(\cdot, \cdot)$ \;
        \tcp{trim tree}
        $\mathtt{T}'_{k+1}.\text{root} = \mathtt{T}_k.\text{root}.\text{best\_child}$ \label{lin:re-rooting_trim} \;
        \uIf{$\|\mathtt{T}'_{k+1}.\textup{root}.\vx - \vx_{k+1}\| > \tau$}{
            $\mathtt{T}'_{k+1}.\text{root} = (\vx_{k+1}, \mathbf{0}, [~], 0, 0)$
        }
    }
}

\setcounter{AlgoLine}{0}
\nosemic
\nonl\text{Search parameters:}\;
\Indp
    \nonl $L:$ Number of iterations,
    $b:$ Branching factor,\;
    \nonl $K:$ Search depth,
    $\epsilon:$ Exploration constant \;
\Indm
\dosemic
\Def{$\textup{UCT\_search}(\mathtt{T}_k; \langle \statespace, \actionspace, F, R, D, \gamma \rangle)$}{
    \For{$\ell=1, \dots, L$}{
        $\text{path} = [\mathtt{T}_k.\text{root}]$ \;
        \tcp{rollout}
        \For{$j=k, \dots, K+k$}{
            \uIf{$\text{len}(\textup{path}[-1].\textup{children}) < b$}{
                $\vu_{j+1} \sim U$ ; \tcp{sample action}
                $\vx_{j+1} = F(\text{path}[-1].\vx, \vu_{j+1})$ ; \tcp{step}
                $\text{next\_node} = (\vx_{j+1}, \vu_{j+1}, [~], 0, 0)$ \;
                $\text{path}[-1].\text{children}.\text{append}(\text{next\_node})$
            } \uElse {
                $\text{next\_node} = \underset{\text{c} \in \text{path}[-1]}{\argmax} \bigg\{\frac{\text{c}.V}{\text{c}.N} + \epsilon \sqrt{\frac{\log{(\text{path}[-1].N)}}{\text{c}.N}}\bigg\}$ \label{lin:upper_confidence_bound}\;
            }
            $\text{path}.\text{append}(\text{next\_node})$
        }
        \tcp{backpropagate}
        $\text{cumulative\_reward} = 0$ \;
        \For{$\textup{node } in \textup{ path}.\textup{reversed}()$}{
            $\text{node}.N \pluseq 1$ \;
            $\text{node}.V \pluseq \text{cumulative\_reward}$ \;
            $\text{cumulative\_reward} = \gamma \cdot \text{cumulative\_reward} + R(\text{node}.\vx, \text{node}.\vu)$
        }
    }
    $\textbf{return}\ \mathtt{T}_k$
}

\setcounter{AlgoLine}{0}
\nosemic
\nonl\text{Controller parameters:}\;
\Indp
    \nonl $Q:$ State cost, 
    $R:$ Input cost \;
\Indm
\dosemic
\Def{$C(\vx_k, \vx^d_k, \vu^d_{k+1}; F)$}{
    $A = \frac{\partial F}{\partial x}\rvert_{\vx^d_k, \vu^d_{k+1}}$, 
    $B = \frac{\partial F}{\partial u}\rvert_{\vx^d_k, \vu^d_{k+1}}$ \;
    $M = DARE(A, B, Q, R)$ \label{lin:dare}\;
    $K = (B^\top M B + R)^{-1} B^\top M A$ \;
    $\vu = \vu^d_{k+1} - K(\vx_k - \vx^d_k)$ \;
    $\mathbf{return} \ \vu$
}
\end{algorithm}


As the disturbance may be drifting over time, past subtree information will have used an old estimate $\hat{\vd}_{k'}$ that is not up-to-date at the current time $k$.
Re-integrating all trajectories in the tree with the up-to-date dynamics information would require a pass over every single node in the tree, removing the intended benefit of reusing the tree.

Our analysis shows that for slowly-changing dynamics, the steady-state tracking error introduced by the use of past estimates is bounded.
Furthermore, understanding the connection between dynamics error, steady-state tracking error, and the contraction metric used to stabilize the system allows us to perform informed hyperparameter tuning.
We analyze this connection in Sec.~\ref{sec:analysis}.
\subsubsection*{Pseudocode Notes}
In Algorithm \ref{algo:mpt}, 
a node is a tuple 
$(\vx,~\vu,~\text{children},~V,~N)$ consisting of a state, the action that led to the state, its list of child nodes, the cumulative value in the subtree below this node, and the total number of visits to this node, respectively.
Line~\ref{lin:real_rollout} is taken to be the real-world application of $\vu_{k+1}$.
$DARE(A, B, Q, R)$ is the Discrete Algebraic Riccati Equation \eqref{eq:DARE}.
In Line~\ref{lin:dynamics_estimate}, we put a placeholder for the user's choice of dynamics estimator, which may be a function of time $k$, the state/input $(\vx_k, \vu_k)$, desired state/input $(\vx^d_k, \vu^d_k)$, or other latent variables.

\section{Theoretical Results}\label{sec:analysis}

In this section, 
we review discrete contraction analysis, derive the robust stability of our controller, and relate the tree reuse to the steady-state error of the proposed controller.
Whereas previous work has used contraction theory in planning to stabilize local trajectories to an existing global plan, both in the  optimization~\cite{singh2017robust} and learning~\cite{tsukamoto2021learning} context, we use contraction theory to analyze the tree reset procedure. 

\subsubsection*{Notations}
All norms are the standard 2-norm unless otherwise specified.
$\mathtt{I}_p$ denotes the $p \times p$ identity matrix.
A square symmetric matrix $A$ is positive definite ($A \succ 0$), positive semidefinite ($A \succeq 0$), negative definite ($A \prec 0$), or negative semidefinite ($A \preceq 0$) if its eigenvalues are positive, nonnegative, negative, or nonpositive, respectively.
$\lambda_\text{max}(A)$ and $\lambda_\text{min}(A)$ are the largest and smallest eigenvalues of $A$.

\subsection{Discrete-Time Contraction Theory}

Consider a discrete-time, time-varying dynamical system 
\begin{equation}
    \vq_{k+1} = F(\vq_k, k), \label{eq:undisturbed_system}
\end{equation}
for time $k \in \mathbb{Z}_{\geq 0}$, state $\vq: \mathbb{Z}_{\geq 0} \rightarrow \mathbb{R}^n$, and a bounded transition function $F: \mathbb{R}^n \times \mathbb{Z}_{\geq 0} \rightarrow \mathbb{R}^n$.

We consider the infinitesimal variation of our system: $
    \delta \vq_{k+1} = \frac{\partial F}{\partial \vq}(\vq_k, k) \delta \vq_k.$ 
We say the system~(\ref{eq:undisturbed_system}) is \textit{contracting} if all solutions exponentially converge to a single trajectory.
\begin{theorem}\label{thm:contraction}
    A necessary and sufficient condition for (\ref{eq:undisturbed_system}) to be contracting~\cite{tsukamoto2021contraction} is the existence of a uniformly positive definite matrix $M(\vq, k) = \Theta(\vq, k)^\top \Theta(\vq, k) \in \mathbb{R}^{n \times n}$, called a contraction metric, where $\Theta$ defines a smooth and invertible coordinate transformation $\delta \vp = \Theta(\vq, k) \delta \vq$, which $\forall \vq, k$:
    \begin{equation}
        \frac{\partial F}{\partial \vq}(\vq, k)^\top M(\vq, k) \frac{\partial F}{\partial \vq}(\vq, k) - \alpha^2 M(\vq, k) \preceq 0,
        \label{eq:discrete_contraction_condition}
    \end{equation}
    for constant $0 \leq \alpha < 1$, called the \textit{contraction rate}.
\end{theorem}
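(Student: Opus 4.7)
The plan is to prove the two directions of the equivalence separately, following the standard template of discrete-time contraction analysis.

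For sufficiency, I would start from the variational equation obtained by differentiating (\ref{eq:undisturbed_system}), $\delta\vq_{k+1}=\Phi_k\,\delta\vq_k$ with $\Phi_k=\frac{\partial F}{\partial\vq}(\vq_k,k)$, and introduce the metric-weighted squared displacement
\begin{equation}
V_k \;=\; \delta\vq_k^\top M(\vq_k,k)\,\delta\vq_k \;=\; \|\Theta(\vq_k,k)\,\delta\vq_k\|^2.
\end{equation}
The quadratic-form reading of condition (\ref{eq:discrete_contraction_condition}) at the base point $(\vq_k,k)$ gives $\delta\vq_k^\top \Phi_k^\top M(\vq_k,k)\,\Phi_k\,\delta\vq_k \le \alpha^2 V_k$, and iterating this pointwise bound yields geometric decay $\|\delta\vp_k\|\le\alpha^k\|\delta\vp_0\|$ of the transformed infinitesimal displacement.

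To pass from infinitesimal to finite distances I would connect any two initial states by a smooth path $\vq_0(s)$, $s\in[0,1]$, propagate it forward to the family $\vq_k(s)$, and track the Riemannian length $L_k=\int_0^1\|\Theta(\vq_k(s),k)\,\partial_s\vq_k(s)\|\,\mathrm{d}s$. Integrating the pointwise infinitesimal bound gives $L_k\le\alpha^k L_0$, and the uniform positive definite bounds $\underline{m}\,\mathtt{I}_n\preceq M\preceq\overline{m}\,\mathtt{I}_n$ convert this into Euclidean convergence of the two trajectories at rate $\sqrt{\overline{m}/\underline{m}}\,\alpha^k$, which is the claimed exponential contraction to a single trajectory.

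Necessity is the constructive half. Assuming all solutions converge exponentially at rate $\alpha$, I would build $M(\vq,k)$ as a discrete accumulated Lyapunov metric of the form $\sum_{j\ge 0}\alpha^{-2j}\,\Psi_{k,k+j}^\top Q\,\Psi_{k,k+j}$, where $\Psi_{k,k+j}=\partial \vq_{k+j}/\partial\vq_k$ is the sensitivity along the flow and $Q\succ 0$ is a reference metric. Exponential stability of the variational dynamics forces the series to converge and to be bounded uniformly above and below, giving a uniformly positive definite $M$; a smooth symmetric square root then supplies $\Theta$, and by construction $M$ satisfies a discrete Lyapunov-type recursion that rearranges to (\ref{eq:discrete_contraction_condition}).

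The main obstacle is the necessity direction: the \emph{uniform} upper and lower bounds on the constructed sum, together with enough regularity to extract a smooth factorisation $\Theta$, require care and use the full strength of the exponential-convergence hypothesis rather than just pointwise convergence of trajectories. Sufficiency is essentially routine once the variational calculation is written down; the finite-distance step only needs a clean exchange of differentiation and path integration. The overall argument mirrors the discrete-time contraction analysis in the cited Tsukamoto--Chung--Slotine reference, adapted to the time-varying metric in the statement.
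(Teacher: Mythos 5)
The paper does not prove this theorem at all: it is stated as an imported result with a citation to the Tsukamoto--Chung--Slotine reference, so there is no in-paper argument to compare against. Your sketch follows the standard route found in that reference --- a Lyapunov-type analysis of the variational dynamics for sufficiency, path integration to pass from infinitesimal to finite distances, and a converse-Lyapunov accumulated-sensitivity sum for necessity --- and the overall outline is sound.

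Two caveats are worth flagging. First, in the necessity direction your series $\sum_{j\ge 0}\alpha^{-2j}\Psi_{k,k+j}^\top Q\,\Psi_{k,k+j}$ does not converge if $\alpha$ is taken to be exactly the assumed exponential decay rate, since each term is then only bounded rather than geometrically decaying; you must certify a rate strictly larger than the true decay rate (or build the gap into the hypothesis), which is the standard converse-Lyapunov slack and should be stated explicitly rather than folded into ``requires care.'' Second, the chaining $V_{k+1}\le\alpha^2 V_k$ needs the metric evaluated at $(\vq_{k+1},k+1)$ in the middle of the quadratic form, i.e.\ $\frac{\partial F}{\partial\vq}^\top M(\vq_{k+1},k+1)\frac{\partial F}{\partial\vq}\preceq\alpha^2 M(\vq_k,k)$; the condition as printed in the statement has $M(\vq,k)$ on both sides, and your literal reading of it at the base point does not by itself let you iterate unless the metric is constant along trajectories. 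Your necessity construction in fact produces the next-step-metric version, so this is arguably a defect of the theorem statement rather than of your argument, but the sufficiency step should be written against the correct form of the inequality.
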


Contraction analysis also extends to discrete-time systems with disturbances.
Consider now a perturbed system:
\begin{equation}
    \vq_{k+1} = F(\vq_k, k) + \sigma(k)
    \label{eq:disturbed_system}.
\end{equation}

\begin{theorem}\label{thm:disturbed_contraction}
    Let the system (\ref{eq:undisturbed_system}) be a contracting system with metric $M = \Theta^\top \Theta$, contraction rate $0 \leq \alpha < 1$, and a particular solution $\vx_k$.
    If $\mathcal{L}^\infty (\Theta \sigma) < \infty$, then any solution $\vz_k$ to the perturbed system (\ref{eq:disturbed_system}) converges exponentially to an error ball around $\vx_k$.
    Furthermore, if we assume $M$ is uniformly bounded as $\underline{m}\mathtt{I}_n \preceq M \preceq \overline{m}\mathtt{I}_n$ and $\mathcal{L}^\infty (\sigma) \leq \overline{\sigma}$, the solutions satisfy, for all $k$:
    \begin{equation}
        \| \vz_k - \vx_k \| \leq \alpha^k\sqrt{\frac{\overline{m}}{\underline{m}}} \| \vz_0 - \vx_0 \| 
        + \frac{\overline{\sigma}}{1 - \alpha}\sqrt{\frac{\overline{m}}{\underline{m}}} (1 - \alpha^k).
        \nonumber
    \end{equation}
\end{theorem}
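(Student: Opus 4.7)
The plan is to lift the perturbed discrete-time dynamics into a one-step Lyapunov-style recursion in the contraction metric and then iterate. I would set $\Delta_k := \vz_k - \vx_k$ and define $V_k := \|\Theta_k \Delta_k\|$, where $\Theta_k$ denotes $\Theta$ evaluated on the nominal trajectory $\vx_k$. Subtracting~\eqref{eq:undisturbed_system} from~\eqref{eq:disturbed_system} and applying the triangle inequality after the $\Theta$-transformation yields
\begin{equation*}
V_{k+1} \leq \|\Theta_{k+1}(F(\vz_k,k) - F(\vx_k,k))\| + \|\Theta_{k+1}\sigma(k)\|,
\end{equation*}
so the proof reduces to upgrading the infinitesimal contraction condition~\eqref{eq:discrete_contraction_condition} into the finite-difference bound $\|\Theta_{k+1}(F(\vz_k,k)-F(\vx_k,k))\| \leq \alpha V_k$.

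To carry out this upgrade, I would parametrize the segment $\vq(s) = \vx_k + s\Delta_k$ for $s\in[0,1]$ and use the fundamental-theorem identity
\begin{equation*}
F(\vz_k,k) - F(\vx_k,k) = \int_0^1 \tfrac{\partial F}{\partial \vq}(\vq(s),k)\,\Delta_k\, ds.
\end{equation*}
At each $s$, condition~\eqref{eq:discrete_contraction_condition} applied to the infinitesimal displacement $d\vq = \Delta_k\,ds$ gives $\|\Theta\,\tfrac{\partial F}{\partial \vq}(\vq(s),k)\,d\vq\| \leq \alpha\|\Theta(\vq(s),k)\,d\vq\|$; integrating over $s\in[0,1]$ and invoking the integral triangle inequality collapses the right-hand side to the required contraction of the finite displacement, producing the clean recursion $V_{k+1} \leq \alpha V_k + \|\Theta_{k+1}\sigma(k)\|$.

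Unrolling the recursion over $k$ steps yields a geometric sum, $V_k \leq \alpha^k V_0 + \tfrac{\|\Theta\sigma\|_{\mathcal{L}^\infty}}{1-\alpha}(1-\alpha^k)$, which already establishes exponential convergence to an error ball and so proves the first claim. Invoking the uniform bounds $\underline{m}\mathtt{I}_n \preceq M \preceq \overline{m}\mathtt{I}_n$ and $\|\sigma\| \leq \overline{\sigma}$ then converts back to the standard $2$-norm via $\sqrt{\underline{m}}\|\Delta_k\| \leq V_k \leq \sqrt{\overline{m}}\|\Delta_k\|$ and $\|\Theta\sigma\| \leq \sqrt{\overline{m}}\,\overline{\sigma}$, delivering the claimed estimate with the $\sqrt{\overline{m}/\underline{m}}$ prefactor. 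The main obstacle is the path-integral step, since~\eqref{eq:discrete_contraction_condition} is a pointwise infinitesimal statement with $M$ evaluated at a single base point on each side, so one must carefully reconcile $\Theta_{k+1}$ on the left of the recursion with $\Theta(\vq(s),k)$ arising inside the integral; if this bookkeeping becomes delicate, a cleaner fallback is to work with the mean-value matrix $A_k := \int_0^1 \tfrac{\partial F}{\partial \vq}(\vx_k+s\Delta_k,k)\,ds$ and invoke~\eqref{eq:discrete_contraction_condition} through Jensen's inequality on the positive-semidefinite pencil $A_k^\top M A_k - \alpha^2 M$.
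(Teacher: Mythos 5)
The paper does not actually prove this theorem---its ``proof'' is a one-line citation to Tsukamoto et al.~\cite{tsukamoto2021contraction}---so your argument should be judged against the standard proof in that reference, which it essentially reproduces: a one-step recursion for a metric-weighted error, a geometric sum, and a conversion back to the Euclidean norm via $\underline{m}\mathtt{I}_n \preceq M \preceq \overline{m}\mathtt{I}_n$. The skeleton and the final constants are right: unrolling $V_{k+1} \leq \alpha V_k + \sqrt{\overline{m}}\,\overline{\sigma}$ gives exactly the claimed bound after dividing by $\sqrt{\underline{m}}$.

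The one genuine soft spot is the step you yourself flag, and your proposed fallback does not repair it. With a state-dependent metric, the quantity $V_k = \|\Theta_k \Delta_k\|$ (endpoint-evaluated $\Theta$ times the chord) does not satisfy a clean recursion, because condition~\eqref{eq:discrete_contraction_condition} controls $\|\Theta(\vq(s),k)\,\tfrac{\partial F}{\partial \vq}(\vq(s),k)\,d\vq\|$ with $\Theta$ evaluated along the path at time $k$, not the $\Theta_{k+1}$ you need on the left of the recursion; moreover the image of the straight segment under $F$ is not straight. The Jensen fallback also fails: the set $\{A : A^\top M A \preceq \alpha^2 M\}$ is convex only when the same $M$ appears at every $s$ (it is then an operator-norm ball $\|\Theta A \Theta^{-1}\| \leq \alpha$), so averaging the Jacobian over the path does not inherit the bound once $M(\vq(s),k)$ varies. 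The standard fix---and what the cited reference does---is to take $V_k$ to be the Riemannian path length $\int_0^1 \|\Theta(\gamma_k(s),k)\,\partial_s\gamma_k(s)\|\,ds$ over a (geodesic) path joining $\vx_k$ and $\vz_k$; the image of $\gamma_k$ under $F(\cdot,k)$ is an admissible path joining the unperturbed successors, the pointwise condition integrates to $\alpha V_k$ without any endpoint mismatch, the perturbation adds at most $\sqrt{\overline{m}}\,\overline{\sigma}$ by the triangle inequality for the induced distance, and the norm-equivalence $\sqrt{\underline{m}}\,\|\Delta_k\| \leq V_k \leq \sqrt{\overline{m}}\,\|\Delta_k\|$ still holds. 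If you assume $M$ constant (which suffices for how the theorem is used later, since the DARE metric in Theorem~\ref{thm:dare_controller} is treated as a single bounded $M$), your chord-based argument goes through verbatim.
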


\begin{proof}
    The proof is shown in~\cite{tsukamoto2021contraction}. \qedhere
\end{proof}

\subsection{Exponential Convergence to a Desired Trajectory}
With the tools of discrete-time contraction, we proceed to analyze the tracking performance of our proposed algorithm.
So far in the analysis we have considered dynamical systems without control inputs;
we now present a constructive proof for a feedback law that guarantees the contraction of a discrete-time control system:
\begin{equation}
    \vx_{k+1} = F(\vx_k, \vu_{k+1}) \label{eq:undisturbed_control_system}.
\end{equation}
The proposed controller is a locally-linearized Ricatti controller that, under suitable assumptions, stabilizes the system to a desired trajectory $(\vx^d_k, \vu^d_k)$ for $k \in \mathbb{Z}_{\geq 0}$.
%
\begin{assumption} \label{assumption:stabilizable_observable}
    For positive definite cost matrices $Q$ and $R$, we assume the linearized system at each $k$ given by 
    \begin{equation}
        A(\vx^d_k, \vu^d_{k+1}) = \nabla_\vx F \rvert_{(\vx^d_k, \vu^d_{k+1})},\ B(\vx^d_k, \vu^d_{k+1}) = \nabla_\vu F \rvert_{(\vx^d_k, \vu^d_{k+1})}, \nonumber
    \end{equation}
    satisfies $(A, B)$ are stabilizable and $(A, Q^\frac{1}{2})$ are observable.
    This guarantees a unique positive definite solution exists to the Discrete Algebraic Riccati Equation (DARE)~\cite{mao2008existence}:
    \begin{equation}
        M = A^\top M A - (A^\top M B) (R + B^\top M B)^{-1} (B^\top M A) + Q\label{eq:DARE}.
    \end{equation}
    Furthemore, we assume the solution to DARE~\eqref{eq:DARE} is uniformly bounded over the state space as $\underline{m}\mathtt{I}_n \preceq M \preceq \overline{m}\mathtt{I}_n$.
\end{assumption}

\begin{theorem}\label{thm:dare_controller}
    Under Assumption~\ref{assumption:stabilizable_observable}, consider the feedback law 
    \begin{equation}\vu_{k+1} = \vu^d_{k+1} - K(\vx_k - \vx^d_k),
    \label{eq:pd_ctrl}
    \end{equation}
    with $K = (R + B^\top M B)^{-1} (B^\top M A)$ and $M$ the solution to $DARE(A, B, Q, R)$~\eqref{eq:DARE}.
    The feedback law in~\eqref{eq:pd_ctrl} yields a closed-loop system that contracts to $\vx^d_k$ with metric $M$ and rate $1 > \alpha \geq \sqrt{1 - \frac{\lambda_\text{min}(Q)}{\overline{m}}}$.
\end{theorem}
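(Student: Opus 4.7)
The plan is to verify the discrete-time contraction condition from Theorem~\ref{thm:contraction} for the closed-loop Jacobian induced by the feedback law, using $M$ (the DARE solution) as the contraction metric. Substituting $\vu_{k+1} = \vu^d_{k+1} - K(\vx_k - \vx^d_k)$ into the control system~\eqref{eq:undisturbed_control_system} and linearizing about the desired trajectory $(\vx^d_k, \vu^d_{k+1})$ gives a closed-loop Jacobian of the form $A - BK$, where $A$ and $B$ are the linearizations from Assumption~\ref{assumption:stabilizable_observable}. So the goal reduces to showing that
\begin{equation}
(A - BK)^\top M (A - BK) - \alpha^2 M \preceq 0 \nonumber
\end{equation}
pointwise in $k$, for $\alpha$ as in the theorem statement.

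The main algebraic step is the standard LQR Lyapunov identity: using the definition of $K$ and the DARE~\eqref{eq:DARE} to rewrite $A^\top M A$ in terms of $M$, $Q$, and $K$, one obtains the identity
\begin{equation}
(A - BK)^\top M (A - BK) = M - Q - K^\top R K. \nonumber
\end{equation}
This follows from cross-multiplying $(A-BK)^\top M (A-BK)$, substituting $B^\top M A = (R + B^\top M B) K$ into the cross terms, and collecting. I would work this out once and quote the result, since it is a routine computation rather than the conceptual content of the proof.

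With that identity in hand, since $R \succ 0$ implies $K^\top R K \succeq 0$, I get the clean bound $(A-BK)^\top M (A-BK) \preceq M - Q$. The final step is to convert $M - Q$ into a bound of the form $\alpha^2 M$. Using the uniform bounds $\underline{m}\mathtt{I}_n \preceq M \preceq \overline{m}\mathtt{I}_n$ from Assumption~\ref{assumption:stabilizable_observable} and $Q \succeq \lambda_\text{min}(Q)\mathtt{I}_n$, the inequality $(1 - \alpha^2) M \preceq Q$ is implied by $(1 - \alpha^2)\overline{m} \leq \lambda_\text{min}(Q)$, which rearranges to $\alpha \geq \sqrt{1 - \lambda_\text{min}(Q)/\overline{m}}$. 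The Assumption ensures $Q \succ 0$ and $\overline{m} < \infty$, so this $\alpha$ lies strictly below $1$, giving the contraction rate claimed.

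The hard part is not any single step but making sure that the DARE solution $M$ and gain $K$, which in our setting depend on the desired trajectory $(\vx^d_k, \vu^d_{k+1})$ and hence on $k$, still satisfy the inequality \emph{uniformly} in $k$; this is exactly where the uniform bounds on $M$ in Assumption~\ref{assumption:stabilizable_observable} are invoked to extract a single contraction rate $\alpha$. Once that is established, a direct application of Theorem~\ref{thm:contraction} to the closed-loop virtual displacement dynamics yields exponential convergence of $\vx_k$ to $\vx^d_k$ with metric $M$ and rate $\alpha$, completing the proof.
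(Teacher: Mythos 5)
Your proposal is correct and follows essentially the same route as the paper: expand $(A-BK)^\top M(A-BK)$, use the definition of $K$ and the DARE to collapse it to $M - Q - K^\top RK$, drop the $K^\top RK$ term, and convert $(1-\alpha^2)M \preceq Q$ into the scalar condition $(1-\alpha^2)\overline{m} \leq \lambda_{\min}(Q)$ via the uniform bounds in Assumption~\ref{assumption:stabilizable_observable}. The only difference is presentational---you quote the Lyapunov identity rather than carrying out the expansion line by line, which the paper does explicitly.
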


\begin{proof}
    Our goal is to show that, for $A_{cl} = A - BK$, \mbox{$A_{cl}^\top M A_{cl} - \alpha^2 M \preceq 0$} holds $\forall \vx, k$.
    Manipulating,
    \begin{align}
        &A_{cl}^\top M A_{cl} - \alpha^2 M = (A - BK)^\top M (A - BK) - \alpha^2 M \nonumber \\
        &= A^\top M A - A^\top M B K - K^\top B^\top M A + K^\top B^\top M B K - \alpha^2 M. \nonumber
    \end{align}
    Plugging in the definition of $K$, note that $K^\top B^\top M B K = K^\top B^\top M A - K^\top R K$, and consequently, the above becomes
    \begin{align}
        &= A^\top M A - A^\top M B K - K^\top R K - \alpha^2 M \nonumber \\
        &= A^\top M A - A^\top M B (R + B^\top M B)^{-1} (B^\top M A) - K^\top R K - \alpha^2 M. \nonumber
    \end{align}
    As $M$ solves $DARE(A, B, Q, R)$, the above becomes
    \begin{equation}
        = M - Q - K^\top R K - \alpha^2 M = (1 - \alpha^2) M - Q - K^\top R K, \nonumber
    \end{equation}
    where $(1 - \alpha^2) M - Q - K^\top R K \preceq 0$ holds if $(1 - \alpha^2) M - Q \preceq 0$, which holds if $(1 - \alpha^2) \overline{m} - \lambda_\text{min}(Q) \leq 0$.
    Thus, the system is contracting with rate $\alpha \geq \sqrt{1 - \frac{\lambda_\text{min}(Q)}{\overline{m}}}$.
\end{proof}

\begin{remark}
    We note that the linearizations in the controller are made about the desired trajectory $(\vx^d_k, \vu^d_{k+1})$, but  if Assumption~\ref{assumption:stabilizable_observable} holds for the true state and desired input $(\vx_k, \vu^d_{k+1})$, we can linearize there.
\end{remark}

The proposed feedback law also enjoys a straightforward robustness result.
Consider a trajectory $(\vx^d_k, \vu^d_k)$ that is a solution to the system (\ref{eq:undisturbed_control_system}), and
consider the disturbed system:
\begin{equation}
    \vx_{k+1} = F(\vx_k, \vu_{k+1}) + \sigma(k) \label{eq:disturbed_control_system}.
\end{equation}
\begin{lemma}\label{lemma:dare_controller_robustness}
    Under Assumption~\ref{assumption:stabilizable_observable}, the proposed control law renders the closed-loop system (\ref{eq:disturbed_control_system}) exponentially stable to an error ball around the desired trajectory:
    \begin{equation}
        \| \vx_k - \vx^d_k \| \leq \alpha^k \sqrt{\frac{\overline{m}}{\underline{m}}} \| \vx_0 - \vx^d_0 \| + \frac{\overline{\sigma}}{1 - \alpha}\sqrt{\frac{\overline{m}}{\underline{m}}} (1 - \alpha^k), \nonumber
    \end{equation}
    where $\mathcal{L}^\infty(\sigma) \leq \overline{\sigma}$ and $\alpha = \sqrt{1 - \frac{\lambda_\text{min}(Q)}{\overline{m}}}$.
\end{lemma}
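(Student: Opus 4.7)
The plan is to combine Theorem~\ref{thm:dare_controller} with Theorem~\ref{thm:disturbed_contraction}, since the lemma is essentially the disturbed-contraction bound applied to the closed-loop system induced by the proposed feedback law. Concretely, I would first substitute the control law \eqref{eq:pd_ctrl} into the disturbed dynamics \eqref{eq:disturbed_control_system} to obtain a closed-loop difference equation of the form $\vx_{k+1} = F_{cl}(\vx_k, k) + \sigma(k)$, where $F_{cl}(\vx, k) := F(\vx, \vu^d_{k+1} - K_k(\vx - \vx^d_k))$ and $K_k$ is the time-varying gain computed from the DARE linearization along $(\vx^d_k, \vu^d_{k+1})$. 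The desired trajectory $\vx^d_k$ is a particular solution of the undisturbed closed-loop system $\vx_{k+1} = F_{cl}(\vx_k, k)$.

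Next, I would invoke Theorem~\ref{thm:dare_controller} to assert that this undisturbed closed-loop system is contracting with metric $M$ (the DARE solution) and rate $\alpha = \sqrt{1 - \lambda_\text{min}(Q)/\overline{m}} < 1$. Because Assumption~\ref{assumption:stabilizable_observable} supplies the uniform bounds $\underline{m}\mathtt{I}_n \preceq M \preceq \overline{m}\mathtt{I}_n$, the contraction metric satisfies the hypotheses of Theorem~\ref{thm:disturbed_contraction}. The disturbance bound $\mathcal{L}^\infty(\sigma) \leq \overline{\sigma}$ together with $\Theta^\top \Theta = M \preceq \overline{m}\mathtt{I}_n$ yields $\mathcal{L}^\infty(\Theta \sigma) \leq \sqrt{\overline{m}}\,\overline{\sigma} < \infty$, so the premise of Theorem~\ref{thm:disturbed_contraction} is met.

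Finally, I would directly apply Theorem~\ref{thm:disturbed_contraction} to the perturbed closed-loop system with $\vx_k$ playing the role of the perturbed trajectory $\vz_k$ and $\vx^d_k$ playing the role of the nominal solution. The statement of that theorem gives exactly the claimed bound
\begin{equation}
    \| \vx_k - \vx^d_k \| \leq \alpha^k \sqrt{\tfrac{\overline{m}}{\underline{m}}} \| \vx_0 - \vx^d_0 \| + \tfrac{\overline{\sigma}}{1 - \alpha}\sqrt{\tfrac{\overline{m}}{\underline{m}}} (1 - \alpha^k). \nonumber
\end{equation}

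The main obstacle is mostly bookkeeping rather than a conceptual leap: one must be careful that the contraction certificate of Theorem~\ref{thm:dare_controller} is established for the closed-loop Jacobian $A_{cl} = A - BK$ evaluated along the \emph{desired} trajectory, whereas in the disturbed system the actual state $\vx_k$ may stray from $\vx^d_k$. The Remark following Theorem~\ref{thm:dare_controller} already addresses this by allowing linearization about $(\vx_k, \vu^d_{k+1})$ provided Assumption~\ref{assumption:stabilizable_observable} holds there, so one only needs to note that the contraction inequality \eqref{eq:discrete_contraction_condition} holds uniformly in the relevant region. Once this is in place, the result is an immediate corollary.
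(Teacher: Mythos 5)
Your proposal is correct and follows exactly the paper's own route: the paper's proof is the one-line observation that the bound follows by applying Theorem~\ref{thm:disturbed_contraction} to the closed-loop system established as contracting in Theorem~\ref{thm:dare_controller}. Your additional bookkeeping (verifying $\mathcal{L}^\infty(\Theta\sigma) < \infty$ and noting the linearization-point subtlety handled by the Remark) simply makes explicit what the paper leaves implicit.
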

\begin{proof}
    The proof follows from the application of Theorem~\ref{thm:disturbed_contraction} to the closed-loop system in Theorem~\ref{thm:dare_controller}.
\end{proof}


We apply this robustness result by analyzing the error introduced by reusing subtrees as the disturbance changes over time.
Past subtrees will have been built with an old estimate, introducing error in the difference between the past and current dynamics estimates.
We first characterize the effect of a time-varying disturbance.
\begin{assumption}
\label{assumption:slowly_changing_disturbance}
    Consider the disturbance $\vd$ in Equation (\ref{eq:decision_making_problem}).
    We suppose that $\vd$ is ``slowly changing'': that the temporal difference of $\vd$ is bounded.
    Furthermore, we assume the dynamics estimate available to the algorithm at a time $k$ is $\varepsilon$-accurate. 
    There exists $\eta, \varepsilon \in \mathbb{R}_+$ such that for all $\vx, \vu, k$:
    \begin{align}
        \|\vd(\vx, \vu, k+1) - \vd(\vx, \vu, k)\| &\leq \eta \nonumber \\
        \| \vd(\vx, \vu, k) - \hat{\vd}_k(\vx, \vu) \| &\leq \varepsilon \nonumber,
    \end{align}
\end{assumption}

Under these assumptions, we can quantify the error introduced by reusing incorrect information from the past and understand the effect it has on the tracking error.
\begin{theorem}
    Under Assumptions~\ref{assumption:stabilizable_observable}-\ref{assumption:slowly_changing_disturbance}, the steady-state tracking error in \ac{mpt} is bounded as:
    \begin{equation} \label{eq:changing_dynamics_steady_state_error}
        \| \vx_\infty - \vx^d_\infty \| \leq \sqrt{\frac{\overline{m}}{\underline{m}}}\frac{(K+1) \eta + \varepsilon}{1 - \alpha},
    \end{equation}
    for $K$ the depth of the tree search and $\alpha = \sqrt{1 - \frac{\lambda_\text{min}(Q)}{\overline{m}}}$.
\end{theorem}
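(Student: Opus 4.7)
The plan is to reduce the theorem to Lemma~\ref{lemma:dare_controller_robustness} by constructing a uniform scalar bound $\overline{\sigma} = (K+1)\eta + \varepsilon$ on the effective disturbance that the controller sees while tracking a desired trajectory extracted from a partially reused tree.

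First, I would account for the ``age'' of each transition on the executed desired trajectory. Because \ac{mpt} promotes the best child to the new root at every iteration and the tree has maximum depth $K$, the transition $(\vx^d_k, \vu^d_{k+1}) \mapsto \vx^d_{k+1}$ scheduled for execution at physical time $k$ was simulated at most on the order of $K$ iterations ago, using some past estimate $\hat{\vd}_{k'}$ with $k - k' \leq K$. Combining the triangle inequality with a telescoping application of the temporal-variation bound of Assumption~\ref{assumption:slowly_changing_disturbance} and its $\varepsilon$-accuracy bound gives
\begin{equation*}
\| \vd(\vx,\vu,k+1) - \hat{\vd}_{k'}(\vx,\vu) \| \leq \| \vd(\vx,\vu,k+1) - \vd(\vx,\vu,k') \| + \| \vd(\vx,\vu,k') - \hat{\vd}_{k'}(\vx,\vu) \| \leq (K+1)\eta + \varepsilon,
\end{equation*}
which is precisely the mismatch between the tree's predicted dynamics $\hat{F}_{k'}$ and the true dynamics $F$ evaluated on the desired trajectory, bounded uniformly in $k$.

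Next, I would identify this quantity with the additive disturbance $\sigma$ of~\eqref{eq:disturbed_control_system} so that $\mathcal{L}^\infty(\sigma) \leq (K+1)\eta + \varepsilon$, and apply Lemma~\ref{lemma:dare_controller_robustness} to the closed-loop system formed by the Riccati feedback of Theorem~\ref{thm:dare_controller}. This produces, for every $k \geq 0$,
\begin{equation*}
\| \vx_k - \vx^d_k \| \leq \alpha^k \sqrt{\overline{m}/\underline{m}} \, \| \vx_0 - \vx^d_0 \| + \sqrt{\overline{m}/\underline{m}} \, \frac{(K+1)\eta + \varepsilon}{1 - \alpha} (1 - \alpha^k),
\end{equation*}
with $\alpha = \sqrt{1 - \lambda_\text{min}(Q)/\overline{m}}$ as in Theorem~\ref{thm:dare_controller}. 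Letting $k \to \infty$ (valid since $0 \leq \alpha < 1$) annihilates the transient term and sends $1 - \alpha^k \to 1$, recovering the claimed steady-state bound~\eqref{eq:changing_dynamics_steady_state_error}.

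The main obstacle is the bookkeeping in the first step: pinning down the worst-case age of the oldest transition that can still lie on the executed trajectory (a node currently at depth $d$ was created at most $K - d + 1$ iterations ago), and checking that the resulting $\overline{\sigma}$ is uniform in time so that a single scalar suffices in Lemma~\ref{lemma:dare_controller_robustness}. Minor care is also needed to confirm that the tree-reset branch on Line~\ref{lin:re-rooting_trim} only tightens the bound (a fresh root corresponds to the $k' = k$ case), and to justify that Lemma~\ref{lemma:dare_controller_robustness}, originally stated for a time-only disturbance signal $\sigma(k)$, still applies when $\sigma$ is implicitly state-dependent through the trajectory but remains uniformly bounded by $\overline{\sigma}$.
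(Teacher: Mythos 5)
Your proposal is correct and follows essentially the same route as the paper: bound the worst-case age of the dynamics estimate underlying the executed desired trajectory by $K+1$ steps, telescope the temporal-variation bound and add the $\varepsilon$-accuracy term to get $\overline{\sigma} = (K+1)\eta + \varepsilon$, then apply Lemma~\ref{lemma:dare_controller_robustness} and let $k \to \infty$. The additional bookkeeping you flag (per-depth node age, the reset branch, state-dependence of $\sigma$) is more careful than what the paper writes out, but the argument is the same.
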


\begin{proof}
    At time step $i$, with corresponding dynamics estimate $\hat{\vd}_i$, the trajectories in the search run for simulation time $j=i+1, \dots, i+K+1$.
    As such, the maximal time difference between a dynamics estimate and when a tree branch using that dynamics estimate becomes part of the desired trajectory is $K+1$ time steps.
    Therefore, when the physical time is $k=i+K+1$, by Assumption~\ref{assumption:slowly_changing_disturbance},
    \begin{equation}
        \| \vd(\vx, \vu, i+K+1) - \hat{\vd}_i(\vx, \vu) \| \leq (K+1) \eta + \varepsilon. \nonumber
    \end{equation}
    As the desired trajectory at $k=i+K+1$ satisfies the dynamics $\vx_{k+1} = F_\text{nom}(\vx_k, \vu_{k+1}) + \hat{\vd}_i(\vx_k, \vu_{k+1})$ and the actual rollout satisfies $\vx_{k+1} = F_\text{nom}(\vx_k, \vu_{k+1}) + \vd(\vx_k, \vu_{k+1}, k)$,
    the tracking error of following the desired trajectory is:
    \begin{equation}
        \| \vx_k - \vx^d_k \| \leq \alpha^k \sqrt{\frac{\overline{m}}{\underline{m}}} \| \vx_0 - \vx^d_0 \| + \frac{(K+1) \eta + \varepsilon}{1 - \alpha}\sqrt{\frac{\overline{m}}{\underline{m}}} (1 - \alpha^k). \nonumber
    \end{equation}
    Letting $k \rightarrow \infty$ yields a steady-state tracking error:
    \begin{equation*}
        \| \vx_\infty - \vx^d_\infty \| \leq \sqrt{\frac{\overline{m}}{\underline{m}}}\frac{(K+1) \eta + \varepsilon}{1 - \alpha}. \qedhere
    \end{equation*}
\end{proof}

The expression~\eqref{eq:changing_dynamics_steady_state_error} dictates the limit of tree reuse in the presence of changing dynamics. 
With this understanding of how the depth of the tree search affects the steady-state tracking error, we can conduct informed parameter design when planning our tree search.
For a given steady-state error threshold, a tradeoff exists between how quickly the disturbance is changing (given by $\eta$) and how far in the future we can search with tree reuse.





\section{Experimental Results and Discussion}\label{sec:results}

We demonstrate \ac{mpt} on an autonomous vehicle testbed in simulation and hardware.

\subsection{Experimental Setup}
In our experiments, our algorithm solves a planar nonprehensile manipulation task, with reward given for pushing a cylindrical object (a barrel) to a goal position.
Let the state be $\vx = [x, y, \theta, x_o, y_o]^\top \in \mathbb{R}^5$, where $(x, y)$ is the inertial position of the vehicle in meters, $\theta$ is the heading in radians, and $(x_o, y_o)$ is the position of the center of the barrel in meters.
The control inputs are $\vu = [V, \delta]^\top \in \mathbb{R}^2$, where $V$ is speed in meters per second and $\delta$ is steering angle in radians, describing an Ackermann car.
The nonlinear dynamics are:
\begin{align}
    \begin{bmatrix}
        x_{k+1} \\ y_{k+1} \\ \theta_{k+1}
    \end{bmatrix}
    =
    \begin{bmatrix}
        x_k \\ y_k \\ \theta_k
    \end{bmatrix}
    + \Delta t
    \begin{bmatrix}
        V_{k+1} \cos(\theta_k) \\ V_{k+1} \sin(\theta_k) \\ \frac{V_{k+1}}{l} \tan(\delta_{k+1})
    \end{bmatrix}
    \nonumber \\
    \begin{bmatrix}
        x_{o, k+1} \\ y_{o, k+1}
    \end{bmatrix}
    =
    \textup{LCP}\left(
        \begin{bmatrix}x_{o,k} \\ y_{o,k}\end{bmatrix},
        \begin{bmatrix}x_{k+1} \\ y_{k+1} \\ \theta_{k+1}\end{bmatrix}
    \right) \nonumber
\end{align}
with $\Delta t$ being the time step and $l$ the wheelbase length.
The states $x_o$, $y_o$ are found by numerically solving the non-penetration constraints with respect to the car geometry in Fig.~\ref{fig:car_drawing} as a linear complementarity problem (LCP), as in~\cite{mirtich1996impulse}.

\begin{figure}[htpb]
    \centering
    \includegraphics[width=1.0\linewidth]{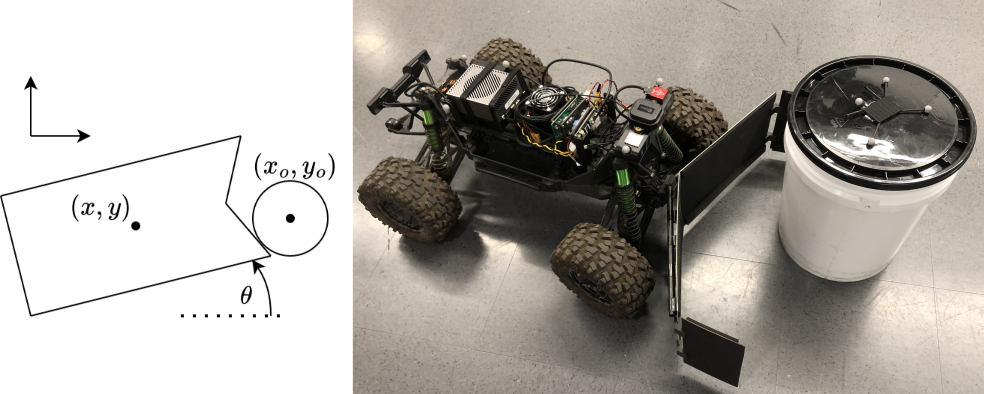}
    \caption{\textbf{Left:} The states and collision geometry of the simulation model used in the experiments. \textbf{Right:} The autonomous vehicle platform and barrel equipped with sensors for state estimation and compute for running our algorithm.}
    \label{fig:car_drawing}
\end{figure}

The reward function is the sum of a nominal reward and a term that increases as the object position approaches $(x_g, y_g)$:
\begin{align}
    R\left(\vx,
    \vu\nonumber\right)
    =
    0.1 + 0.9 \left(1 - \frac{1}{D}\sqrt{(x_o - x_g)^2 + (y_o - y_g)^2}\right)
    \label{eq:sparse_reward}
\end{align}
for normalizing constant $D$.
This reward is sparse because when the vehicle is not in contact with the barrel, no improvement in reward is available until first, contact is made and second, the barrel is pushed toward the goal.

The input limits are $|V| \leq 1~\mathrm{m/s}, \ |\delta| \leq 0.42~\mathrm{rad}$, according to the steering limits of our platform. 
For \ac{mpt} (and the \ac{uct} baseline below), we discretize the action space as $(V, \delta) \in \{ (0, 0), (\pm 1, 0), (\pm 1, \pm 0.42) \}$, sampling uniformly without replacement during tree growth.

\subsection{Baselines}

We compare our method against 3 baselines: (1) \ac{uct} deployed with no tree reuse, referred to as ``\ac{uct}'' in our experiments. This algorithm operates in the same way as \ac{mpt}, but the next root node contains no children from the previous iteration;
(2) a cross-entropy motion planner (\ac{cem}) implemented based on~\cite{kobilarov2012cross}  with a Gaussian input distribution, ten iterations, and 10\% elite particle fraction; and (3) \ac{cem} that hotstarts sampling with the optimal solution of the previous iteration (\ac{cem}-Reuse).


\subsection{Numerical Experiments}


In Fig.~\ref{fig:heatmap_mosaic}, we compare the performance of MPT against the baselines of UCT, CEM, and CEM-Reuse on a grid of initial states.
For $\theta_0 = 0,\ x_{o, 0} = y_{o, 0} = 0$, we vary the initial $x$ and $y$ position of the car over a $4$m $\times$ $4$m space. 
The goal is to push the barrel from its initial position at $(0, 0)$ to $(x_g, y_g) = (4, 0)$.
Here, the value is calculated as the realized (undiscounted) cumulative reward of running each planner in receding horizon fashion for 100 time steps, executing the first proposed action and replanning at each time step.

\begin{figure}[ht]
    \centering
    \includegraphics[width=1.0\linewidth]{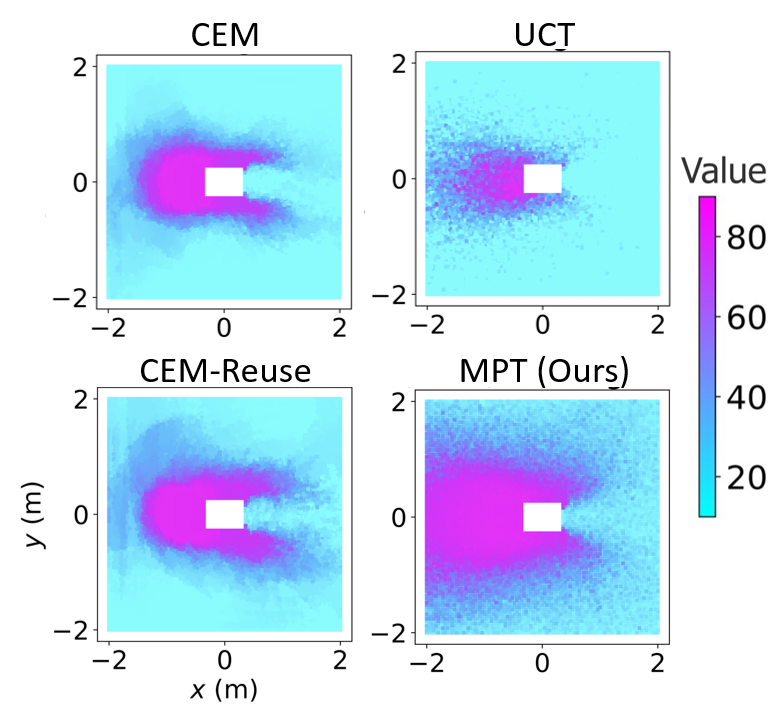}
    \caption{For a grid of $(x,y)$ initial car positions, we color each point according to the accumulated value of running each algorithm (CEM, CEM-Reuse, UCT, MPT).
    Purple indicates higher value.
    These simulations were generated with $L=200$, a planning horizon of $10$, and a simulation depth of $100$. The value shown is averaged across ten runs at each initial condition.
    Our proposed method, \ac{mpt}, provides the best average cumulative reward of all methods, with a significant improvement over the baseline \ac{uct} method.
    }
    \label{fig:heatmap_mosaic}
\end{figure}

The value produced by each method averaged over the state space is summarized below.
Information reuse results in a significant improvement between \ac{uct} and \ac{mpt}.
Whereas \ac{uct} is outperformed by \ac{cem}, the improvement due to reusing information results in \ac{mpt} having a $29.9\%$ higher value than \ac{cem}-Reuse, the next best method.

\begin{center}
\begin{tabular}{|c|c|c|c|c|}
    \hline
    \textbf{Method} & \textbf{\ac{cem}} & \makecell{\textbf{\ac{cem}-} \\ \textbf{Reuse}} & \textbf{\ac{uct}} & \makecell{\textbf{\ac{mpt}} \\ \textbf{(Ours)}} \\
    \hline \hline
    Average Value & 23.63 & 28.97 & 16.79 & 37.64 \\
    \hline
    \makecell{Reuse $\%$ \\ Improvement} & - & $22.6\%$ & - & $124\%$ \\ 
    \hline
\end{tabular}
\end{center}

In the task, \ac{cem} methods are unable to reliably find a solution unless the car is initialized close to the barrel.
Each method performs most consistently when the vehicle starts directly to the left of the barrel, where driving forward will push the barrel to the goal.

\ac{mpt} is able to find high-valued solutions even when the initial position is far from the barrel.
\ac{mpt} can quickly find these ``needle in a haystack'' solutions that require a coordinated maneuver to make contact with then push the barrel to the goal.
The reuse of the search trees of previous iterations allows \ac{mpt} to quickly concentrate its search on high-valued trajectories, without wasting computational effort re-searching through low-valued trajectories.

\subsection{Sample Efficiency}

We examine sample efficiency by considering one initial condition $\begin{bmatrix}x \ y \ \theta \ x_o \ y_o\end{bmatrix}^\top = \begin{bmatrix}\text{-}1.5 \ \text{-}0.5 \ 0 \ 0 \ 0\end{bmatrix}^\top$ and goal position $(x_g, y_g) = (4, 0)$.
We deploy each algorithm in receding horizon fashion where at each time step, $L$ simulations are run and the first step of the plan is taken.
As before, we evaluate the cumulative reward. 
We visualize this metric in Fig.~\ref{fig:v_over_n} vs.~the number of simulations $L$.

\begin{figure}[ht]
    \centering
    \includegraphics[width=0.9\linewidth]{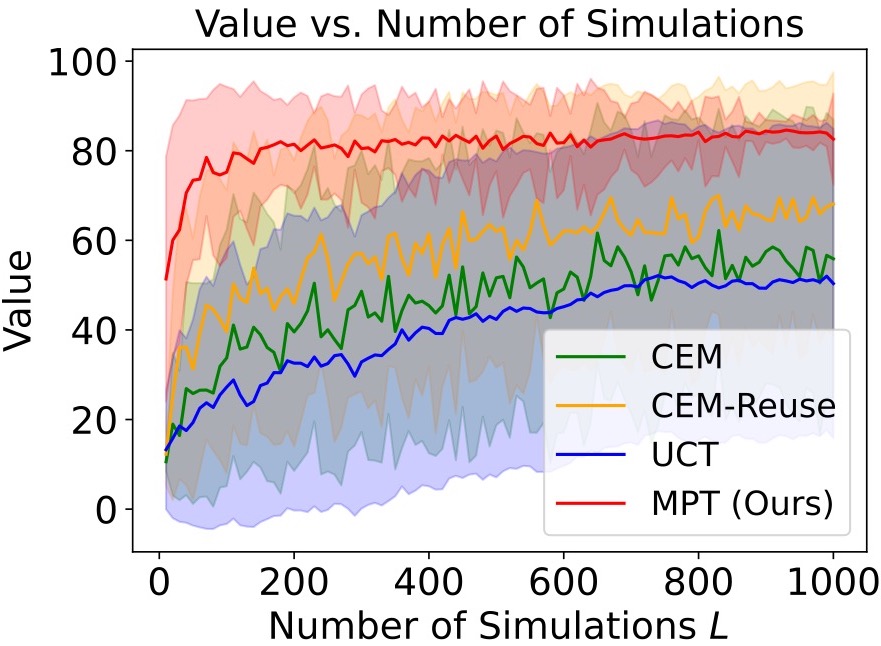}
    \caption{
    The value of the trajectory produced by each planning method versus the number of simulations. 
    For each $L$, $100$ trials are run, and the average value is plotted with one standard deviation error bar.
    Our proposed algorithm (\ac{mpt}) significantly outperforms the baselines and has a less noisy estimate.
    }
    \label{fig:v_over_n}
\end{figure}
Our proposed algorithm greatly outperforms the baselines, with a rapid rise in value, reaching an asymptotic limit at $L=180$. 
The competing baselines exhibit a much slower increase in value, highlighting the sample efficiency of \ac{mpt}.
Furthermore, the value estimates produced by the baselines are significantly noisier than that of \ac{mpt}.
We extend the simulation count to see where the average value produced by each baseline draws level to the asymptotic limit found by \ac{mpt}, with \ac{uct} not catching up in the considered range.
\begin{center}
\begin{tabular}{|c|c|c|c|c|}
    \hline
    \textbf{Method} & 
    \textbf{\ac{mpt}} &
    \textbf{\ac{cem}} & \textbf{\ac{cem}-Reuse} & \textbf{\ac{uct}} \\
    \hline \hline
    \makecell{$L$ needed to \\ reach $V\!=80$} & 180 & 5200 & 3000 & $>$30000 \\
    \hline
\end{tabular}
\end{center}

\subsection{Hardware Results}

\begin{figure}[htpb]
    \centering
    \includegraphics[width=0.9\linewidth]{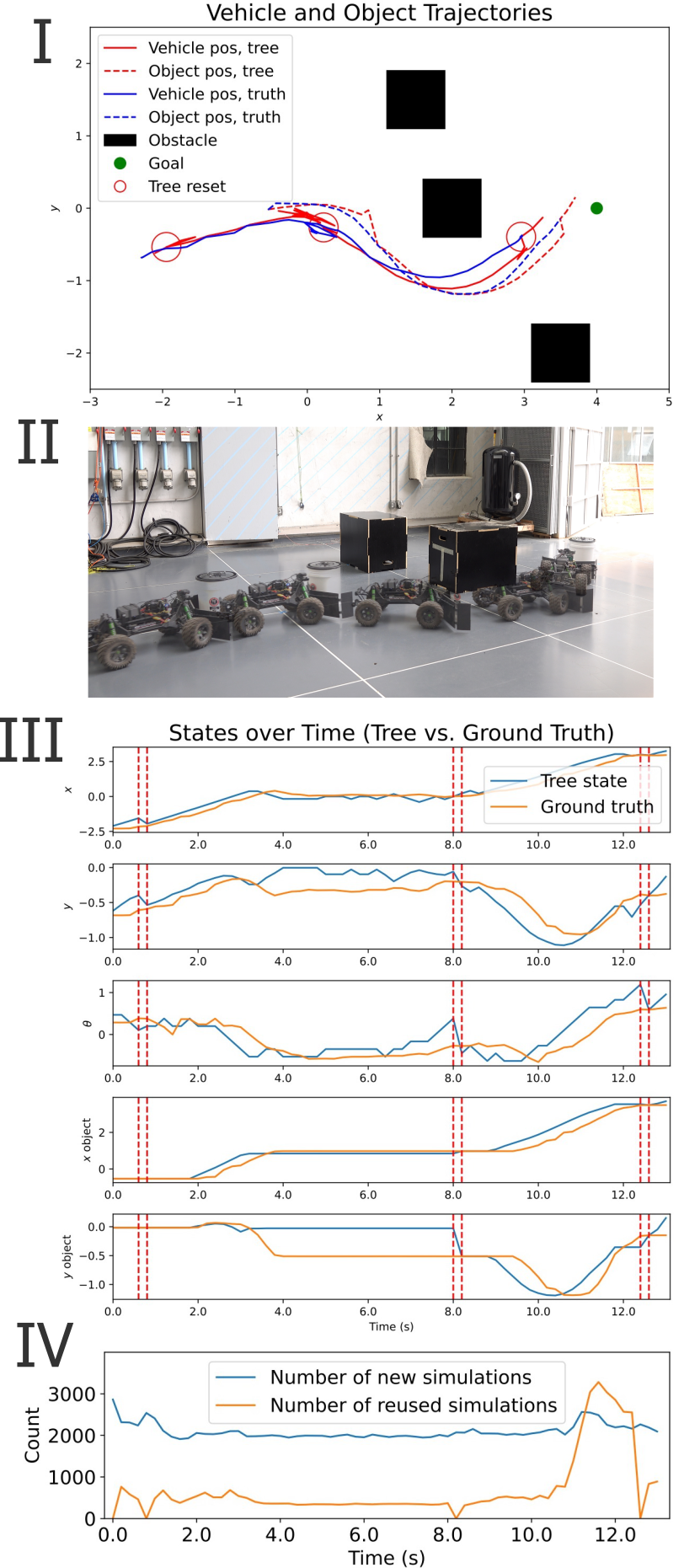}
    \caption{Our algorithm plans for and executes a solution onboard an autonomous vehicle testbed. 
    \textbf{I:} The trajectory of the vehicle (solid line) and the pushed object (dashed line) are shown, with the planned trajectory in red and the actual trajectory in blue.
    The tree resets are circled in red.
    \textbf{II:} An overlay of the trajectory of the vehicle and barrel over the course of the experiment in the Caltech Center for Autonomous Systems and Technologies.
    \textbf{III:} The states as simulated by the tree (blue) and as measured by the motion capture (orange).
    The tree reset instances are each shown as a pair of red lines.
    \textbf{IV:} The number of simulations saved by reusing the tree is shown.
    On average, one third of the new simulations are carried over to the next planning iteration.
    Near the end of the experiment, when the optimal behavior is easy to find, the number of reused simulations rises dramatically.
    The decision trees grown here are highly concentrated to the optimal actions at each depth.
    }
    \label{fig:hardware-run}
\end{figure}
We verify the ability of \ac{mpt} to be deployed on hardware by implementing our algorithm on the autonomous vehicle testbed shown in Fig.~\ref{fig:car_drawing}.
We task \ac{mpt} to solve the barrel-pushing task in an environment with three obstacles. \ac{mpt} is able to plan in real time and execute a $12-$second pushing operation that maneuvers the barrel around the obstacles to the goal position.
The trajectory of the vehicle and the barrel around the obstacles are shown in Fig.~\ref{fig:hardware-run}.

Our algorithm is able to plan through high-level behaviors of making and breaking contact with the barrel.
Halfway through the experiment, the vehicle stops, backs up, and re-positions itself behind the barrel to make a push around an obstacle.
Such a maneuver is only possible if planning through the hybrid dynamics, a distinct advantage of our proposed method.
We show that state-of-the-art baselines are either too sample-inefficient or unable to plan through the dynamics, rendering the observed behavior unique to \ac{mpt}.

At each planning iteration, we check the tree reset condition with $\tau=0.5$.
At three times in the experiment, the tree is reset when the state of the vehicle or barrel diverge from the simulated state.
The resets are all triggered by a mismatch in the $\theta$ state or the object position, meaning the dynamics mismatch is occurring in the steering model and contact dynamics.
In this task, we use a constant estimated dynamics model, but the real physics of the contact include friction effects, deformation, and other unmodeled dynamics that contribute to the model mismatch, resulting in resets. 

In this experiment, our \ac{mpt} planner is running at $5$ Hz on an onboard NVIDIA Jetson Orin, running approximately 2100 simulations every 0.2 s.
We measure the position of our vehicle and the barrel with motion capture.
For our numerical and hardware experiments, we use a value estimate $\hat{V} \equiv 0$.
If data is available, an option is to train a neural network value estimator, as in related works in tree search~\cite{riviere2021neural}.

\section{Conclusion}



We present \ac{mpt}, a new receding horizon planning framework that reuses a rich set of information from prior solver iterations to solve challenging planning problems.
Our theoretical analysis guarantees the stability of our method and robustness to model mismatch, characterizing the limitations of tree reuse.
We use our planner to produce solutions in real-time for a challenging nonprehensile manipulation task to push a target barrel through an obstacle field.
We demonstrate the performance improvement of our algorithm against state-of-the-art sampling-based planners, isolating the effect of replanning with partial and complete information reuse.
Our results suggest information reuse is an important area of study that can provide significant improvement to a wide variety of algorithms and applications.



\section*{ACKNOWLEDGMENT}

The authors would like to thank E.~S.~Lupu, J.~A.~Preiss, and F.~Xie for technical discussions.


\bibliographystyle{ieeetr}
\bibliography{papers}

\end{document}